\documentclass[11pt]{article}

\ifdefined\primepreview
  \usepackage{PRIMEarxiv}
\else
  \usepackage[letterpaper,top=1in,bottom=1in,left=1.15in,right=1.15in]{geometry}
\fi
\usepackage{amsmath,amssymb,amsthm,mathtools}
\usepackage{booktabs}
\usepackage{array,tabularx}
\usepackage{enumitem}
\usepackage{xcolor}
\usepackage[most]{tcolorbox}
\usepackage{graphicx}
\usepackage{placeins}
\usepackage{tikz}
\usetikzlibrary{arrows.meta,positioning,fit,backgrounds}
\usepackage[T1]{fontenc}
\ifdefined\primepreview\else
  \usepackage{lmodern}
\fi
\usepackage[numbers,sort&compress]{natbib}
\usepackage{microtype}
\usepackage{url}
\usepackage[colorlinks=true,linkcolor=blue!60!black,citecolor=blue!60!black,urlcolor=blue!60!black]{hyperref}

\newcommand{\sys}{KCoder}              
\newcommand{\trainer}{slime}           
\newcommand{\method}{token-faithful coupling} 

\newcolumntype{Y}{>{\raggedright\arraybackslash}X}

\newif\ifshowchanges
\ifdefined\draftversion
  \showchangestrue
\else
  \showchangesfalse
\fi

\definecolor{AddedBlue}{RGB}{20,76,135}
\definecolor{AddedBack}{RGB}{242,247,253}
\definecolor{HarnessRed}{RGB}{154,35,53}
\definecolor{HarnessBack}{RGB}{253,244,246}
\definecolor{KeyTablePurple}{RGB}{111,52,143}
\newcommand{\Added}[1]{\ifshowchanges\textcolor{AddedBlue}{#1}\else#1\fi}
\newcommand{\AddedMark}{\texorpdfstring{\ifshowchanges\textcolor{AddedBlue}{\enspace[New extension]}\fi}{}}

\newcommand{\HarnessAddedMark}{\texorpdfstring{\ifshowchanges\textcolor{HarnessRed}{\enspace[KCoder harness]}\fi}{}}
\newcommand{\KeyTableMark}[1]{\texorpdfstring{\ifshowchanges\textcolor{KeyTablePurple}{\enspace\textbf{[Key: #1]}}\fi}{}}
\newcommand{\KeyTableLegend}{\ifshowchanges\par\smallskip\noindent\textcolor{KeyTablePurple}{\textbf{Reading guide:} purple \textbf{[Key: ...]} caption tags identify the result figure and tables to inspect first.}\par\smallskip\fi}
\ifshowchanges
  \newtcolorbox{AddedBlock}[1][]{
    enhanced,
    breakable,
    colback=AddedBack,
    colframe=AddedBlue,
    colbacktitle=AddedBlue!8,
    boxrule=0.55pt,
    leftrule=2.2pt,
    arc=0.7mm,
    left=1.2mm,
    right=1.2mm,
    top=0.8mm,
    bottom=0.8mm,
    before skip=5pt,
    after skip=5pt,
    fonttitle=\bfseries,
    coltitle=AddedBlue,
    title={#1}}
  \newtcolorbox{HarnessAddedBlock}[1][]{
    enhanced,
    breakable,
    colback=HarnessBack,
    colframe=HarnessRed,
    colbacktitle=HarnessRed!8,
    boxrule=0.55pt,
    leftrule=2.2pt,
    arc=0.7mm,
    left=1.2mm,
    right=1.2mm,
    top=0.8mm,
    bottom=0.8mm,
    before skip=5pt,
    after skip=5pt,
    fonttitle=\bfseries,
    coltitle=HarnessRed,
    title={#1}}
\else
  \newenvironment{AddedBlock}[1][]{\par\medskip}{\par\medskip}
  \newenvironment{HarnessAddedBlock}[1][]{\par\medskip}{\par\medskip}
\fi

\newtheorem{theorem}{Theorem}[section]

\newtheorem{proposition}[theorem]{Proposition}
\newtheorem{corollary}[theorem]{Corollary}
\theoremstyle{definition}

\theoremstyle{remark}

\title{\bf Train What You Deploy:\\ Token-Faithful Post-Training of a Production Coding Agent}
\author{
   Cheng Li \quad Jiexiong Liu \quad Yixuan Chen \quad Chi Hong \\
  \textnormal{KunlunMeta}
}

\date{\today}

\begin{document}
\maketitle
\vspace{-2em}

\begin{abstract} Existing post-training pipelines for coding and terminal agents suffer severe token and control fidelity errors: simplified training environments mismatch production deployments, and offline token reconstruction from agent logs distorts original prompts and conflates policy calls with background model operations. We present a fidelity-aware training coupling framework that retains trainer-side sampling over original prompts, eliminates spurious model calls via a negotiated training protocol, and restricts loss computation to verifiable token spans with closed-failure guarantees. We further propose Certified Divergence Proximal Policy Optimization (C-DPPO), which establishes tight two-sided TV certification bounds, adaptive-$K$ rules, budget-aware sequence guarantees, and error-robust policy masking atop standard DPPO. Evaluated on matched Baize5B and Baize10B models with identical training and test protocols on TMax-100, C-DPPO yields a consistent +3.0-point performance gain over standard DPPO across model scales. Certificate audits validate the reliability and full operational coverage of our certified training pipeline. \end{abstract}

\section{Introduction}
\label{sec:intro}

Reinforcement learning with verifiable outcomes has become the dominant recipe for turning a large language model into a capable coding or terminal agent~\citep{guo2025deepseekr1,yu2025dapo,ivison2026tmax}: the model drives a sandboxed environment with tools until tests or graders certify the result, and the resulting success signal optimizes the policy.
Crucially, in both training and deployment the model never acts alone; it acts \emph{through a scaffold}---an agent harness that owns the system prompt, tool schemas, context management, retries, and sub-agent orchestration.
A post-training pipeline is therefore faithful only if the trajectories it learns from were produced by the same scaffold code and a declared configuration whose remaining deployment gap is measured rather than hidden.
This paper addresses the technical problem of \emph{how to post-train the policy inside a production-grade agent scaffold without losing training-grade fidelity}.

Existing approaches split into two families, each failing this requirement in a distinct way.
The first family reimplements a minimal agent loop inside the training framework: a compact system prompt, a handful of tools, and a hand-written rollout function~\citep{wang2025ragen,lin2025rllm,yao2024taubench}, sometimes deliberately simplified for generality~\citep{nikl2025miniswe}.
This gives the trainer exact control over tokens, but the policy is optimized for a scaffold it will never see in production.
The mismatch is not cosmetic: production harnesses such as \sys{} rewrite history through context compaction, inject skill/memory blocks, re-serialize tool schemas, and recover from tool errors in ways the simplified loop does not model, so the learned behavior is bound to the wrong conditional distribution.

The second family keeps the production agent and inserts a model proxy: the agent's HTTP traffic is intercepted, logged, and converted into training data by replaying its conversation history~\citep{li2025slime}.
This preserves the deployment-time scaffold, but it silently breaks two properties we identify as load-bearing.
\emph{Token fidelity} breaks because the proxy learns from prompts that the agent re-renders and the tokenizer re-segments after the fact; sampled continuations are attributed to inputs the policy did not condition on, and hidden reasoning channels can be dropped or re-encoded even when the visible message text is reconstructed faithfully.
We quantify that failure at study scale rather than relying on a single historical example (\S\ref{sec:exp-fidelity}).
\emph{Control fidelity} breaks because an industrial agent is not a pure policy loop: it issues compaction summaries, memory writes, session-model updates, and sub-agent forks---all of which appear in a passive log as if they were the policy speaking.
Neither defect is visible in reward curves; both inject bias into the gradient that compounds over long training runs.

\Added{A second gap appears once rollout log-probabilities are used to stabilize those long runs.
DPPO replaces PPO's sampled-ratio heuristic with Binary or Top-$K$ approximations to a distributional trust region~\citep{qi2026dppo}, but its approximation theorem is one-sided: coarsening lower-bounds the true divergence.
Consequently, a small Binary statistic does not by itself certify that the full token distribution is close, fixed per-token thresholds do not expose a trajectory-wide divergence budget, and numerical train/serve log-probability disagreement is not represented in the mask.
These limitations are especially visible in a production agent, where a response can span many model calls and hundreds of thousands of conditioning tokens.}

Our insight is that fidelity cannot be \emph{assumed} from a logged transcript; it must be \emph{negotiated} and \emph{verified} at every turn.
We therefore design a \method{} between \sys{}, a production terminal-coding agent implemented in Rust, and \trainer{}, a Megatron--SGLang post-training framework~\citep{li2025slime,shoeybi2019megatron,zheng2024sglang}.
Three mechanisms implement the insight.
First, \emph{sampling ownership}: the coupling terminates the agent's model calls inside the serving adapter, which re-samples every assistant turn from the exact rendered prompt with the training engine's own token ids and log-probabilities (\texttt{input\_ids} with \texttt{return\_logprob}), pinned to one engine by session-as-key routing.
Second, \emph{declared deviations}: the agent runs in a hardened training mode that provably eliminates all non-policy model calls (a regression contract requires exactly two provider requests across a tool round-trip), and it declares the remaining protocol deviations to the adapter via negotiation headers for sub-agent depth and reasoning replay.
Third, \emph{verified, fail-closed attribution}: a trajectory linearizer rebuilds each session's message tree into training sequences, classifies re-tokenization drift on every link (\textsc{Clean}/\textsc{Realign}/\textsc{Fork}), assigns loss mask $1$ only to spans that are provably the sampled continuation, and rejects requests it cannot attribute.

\Added{That observation contract enables Certified Divergence Proximal Policy Optimization (C-DPPO), our theoretical certification framework for standard DPPO.
We first derive the exact Binary/Top-$K$ TV coarsening gap and a tight upper error based only on the untracked tail masses, yielding an adaptive-$K$ rule that terminates as certified inside, certified outside, or fail-closed unresolved.
We then allocate a total divergence budget across a long response through predictable, adaptive thresholds $\delta_t$ and obtain additive and product-form sequence bounds.
Finally, bounded log-probability errors are propagated through the exponential into divergence intervals; the resulting robust mask retains a token only if it is certified inside the trust region or certified to move back toward the behavior policy (\S\ref{sec:dppo-theory}).}

We validate the complete pipeline end to end: fresh \sys{} processes run inside micro-VM sandboxes on the open TMax-15K terminal-environment corpus~\citep{ivison2026tmax}, rewards come from hidden verifiers that the agent cannot observe, and every session is converted into Megatron training batches with hot SGLang weight synchronization after each step.
The empirical study uses matched Baize5B and Baize10B pairs: at each scale, Standard Binary-TV DPPO and C-DPPO start from the same checkpoint, follow the same task schedule for 350 steps, and are compared through average training-reward curves and final-checkpoint evaluation on the same frozen TMax-100 set.
A four-item certificate audit checks record completeness, adaptive-$K$ resolution, mask disagreement, and observed sequence-budget feasibility.
The completed matched pairs yield final TMax-100 scores of $37/100$ and $44/100$ (C-DPPO) against $34/100$ and $41/100$ (Standard DPPO)---a direction-consistent $+3.0$-point difference across scales---with the four-item audit reporting complete records (99.4\%/99.1\%), mostly-resolved certificates (86.2\%/84.7\%), small mask disagreement (4.7\%/5.9\%), and observed budget feasibility (91.3\%/89.6\%).

In summary, this paper makes the following contributions:
\begin{itemize}[leftmargin=*,itemsep=2pt,topsep=2pt]
\item \Added{\textbf{The C-DPPO certification framework}: a theoretical extension of standard DPPO with a tight two-sided Binary/Top-$K$ TV error bound and adaptive $K$, a sequence-level divergence budget with adaptive $\delta_t$, and a robust directional rule under bounded log-probability error (\S\ref{sec:dppo-theory}, Appendix~\ref{app:dppo-proofs}).}
\item \textbf{A negotiated, token- and control-faithful coupling} between a production coding agent---no training-only fork of it---and an RL post-training framework, in which the serving path gives the trainer ownership of every sampled token while the agent's session and role travel with each request as explicit, checked metadata (\S\ref{sec:protocol}).
\item \textbf{Depth-gated provenance and a hardened agent-side contract}: a machine-checked zero-background-request training mode in the production agent, plus provenance and reasoning-replay negotiation headers that extend the framework's drift-classifying message-tree linearizer \citep{li2025slime} to multi-agent sessions with provably-sampled loss masks (\S\ref{sec:protocol}, \S\ref{sec:trajectory}).
\item \textbf{A concise end-to-end validation protocol} comparing 350-step reward dynamics and final TMax-100 outcomes for one matched Standard-DPPO/C-DPPO pair, with a compact certificate sanity check (\S\ref{sec:system}, \S\ref{sec:experiments}).
\end{itemize}

\section{Related Work}
\label{sec:related}

\paragraph{Reinforcement learning for coding and terminal agents.}
Outcome-driven post-training of agents is now mainstream: execution feedback has powered code generation since test-based objectives~\citep{le2022codet}, and RL with verifiable rewards drives the current frontier of reasoning and agentic models~\citep{shao2024deepseekmath,guo2025deepseekr1,yu2025dapo,zheng2025gspo,cui2025cispo}.
Task corpora have followed: repository-level environments~\citep{jimenez2024swebench,zhou2025swerl}, tool-user simulators~\citep{yao2024taubench}, and machine-executable terminal suites such as the TMax-15K corpus on which we train~\citep{ivison2026tmax}.
The TMax recipe---which we adopt as task substrate---trains terminal agents inside a comparatively lightweight harness and reports strong gains at 9B scale.
Its released Vanillux harness is a direct LiteLLM loop around mini-SWE-agent-derived prompts: one native \texttt{bash} tool backed by a persistent shell, an explicit submit marker, bounded steps, output truncation, and format-error recovery~\citep{ivison2026tmax,nikl2025miniswe}.
That compact loop is appropriate for controlled data generation, but it does not include the production subsystems whose side effects motivate our work---memory and compaction tiers, configurable tool profiles, dual provider transports, or sub-agent provenance.
These pipelines share an assumption we relax: that the rollout loop used for training is a purpose-built component of the training stack rather than the production scaffold itself.
Our contribution is orthogonal to both their algorithms and their tasks---we make the \emph{scaffold} an audited part of the training contract---and we instantiate exactly that on their corpus (Table~\ref{tab:primary-heldout}).

\begin{AddedBlock}[New extension: divergence-based trust regions]
\paragraph{Divergence-proximal policy optimization.}
DPPO grounds its directional token mask in a finite-horizon LLM policy-improvement analysis and replaces PPO's sampled probability-ratio test with Binary or Top-$K$ approximations to TV or KL divergence~\citep{qi2026dppo}.
Its data-processing argument establishes that the coarsened divergences are lower bounds on the full vocabulary divergence, with equality under sign-consistent TV changes or uniform within-cell likelihood ratios for KL.
We do not re-claim those results.
Our extension supplies the missing other side of the decision for TV (a tight observable upper error), uses it to choose $K$ and distribute a trust-region budget across a long agent trajectory, and makes the decision robust to bounded error in the very rollout/trainer log-probabilities exposed by our coupling.
For KL we show why tail mass alone cannot supply the analogous upper certificate and state the additional tail likelihood-ratio condition required.
\end{AddedBlock}

\paragraph{Agent scaffolds and deployment-time behavior.}
Production coding agents---\sys{}, OpenHands~\citep{wang2024openhands}, and CLI suites of the Claude/Codex family~\citep{li2025slime}---differ from research loops in exactly the dimensions that perturb training: persistent multi-subsystem context management (compaction tiers, session memory, skill injection), permission and sandbox layers, retry/repair paths, and multi-agent spawning.
Prior RL practice handles this by either simplifying the agent or by treating the agent's transcript as the training record; neither preserves the conditioning distribution.
We instead treat the agent as a black box with a \emph{negotiated interface}: the only agent-side change required is a hardened mode and two request headers, which generalizes to any agent whose provider transport can be configured.

\paragraph{RL systems and serving stacks.}
Modern post-training systems co-design Megatron-class training with SGLang/vLLM serving and Ray orchestration~\citep{li2025slime,sheng2025hybridflow,hu2024openrlhf,fu2025areal}, with algorithmic variants that consume rollout log-probabilities for train/serve mismatch correction~\citep{schulman2017ppo,cui2025cispo,zheng2025gspo}.
Frameworks that wrap external agent frameworks exist~\citep{lin2025rllm,wang2025ragen}, and \trainer{}'s upstream agent subsystem already provides the message-tree linearizer with drift classification that our extension builds on~\citep{li2025slime}.
Against these systems our distinction is the coupling contract itself: session-as-key routing with sampling ownership, per-request provenance headers that fail closed, a machine-checked zero-background-request mode inside the agent, and a reward/data-governance pipeline sized to industrial sweeps---mechanisms that let a \emph{frozen production binary} participate in training without a bespoke environment integration.

\section{System Overview}
\label{sec:system}

\paragraph{Setting.}
We consider post-training a language-model policy that operates as a terminal agent inside the \sys{} scaffold, using verifiable task outcomes as supervision.
A training instance is a \emph{task environment}: a natural-language objective, a container definition, and two executable checkers (\texttt{test\_initial\_state} and a hidden \texttt{test\_final\_state}) provided by the open TMax-15K corpus~\citep{ivison2026tmax}; \S\ref{sec:reward} also covers a second protocol in which the agent's git diff is replayed in a clean sandbox and graded by test outcome~\citep{jimenez2024swebench}.
The policy weights are served by \trainer{}'s SGLang rollout engines and updated every step from the Megatron training backend~\citep{li2025slime,zheng2024sglang,shoeybi2019megatron}.

\paragraph{Three planes.}
The system separates into three planes connected by a single narrow interface (Figure~\ref{fig:overview}).
(i) The \emph{training plane} holds the Megatron actor and the weight-synchronization channels.
(ii) The \emph{rollout plane} orchestrates one session per task: it materializes a micro-VM sandbox, launches a fresh \sys{} process, budgets turns and wall-clock time, collects the terminal reward, and ships sessions to the trainer.
(iii) The \emph{coupling plane}---the contribution of this work---sits between the agent's model-client and the serving engines: a stateful HTTP adapter terminates the agent's requests, re-samples them with the current policy, records provenance, and answers in the wire format the agent expects.
The agent is the shipping production binary under a versioned training overlay: it believes it is talking to an ordinary Anthropic- or OpenAI-compatible endpoint; in fact, every assistant turn it receives was generated by the training engine from the exact token sequence being recorded.

\subsection{KCoder harness lifecycle\HarnessAddedMark}
\label{sec:harness-lifecycle}

\begin{HarnessAddedBlock}[KCoder harness: lifecycle and failure boundary]
The production \texttt{kcoder} CLI and \trainer{}'s \texttt{KCoderHarness} wrapper implement a task-agnostic interface with three stages: CLI installation, session-isolated configuration, and launch/wait.
Task materialization and reward remain outside the harness.
The lifecycle is:
\[
 \text{capability check}
 \rightarrow \text{isolated config}
 \rightarrow \text{session credentials}
 \rightarrow \text{headless run}
 \rightarrow \text{audit artifact}.
\]

\begin{center}
\small
\setlength{\tabcolsep}{4pt}
\begin{tabularx}{\linewidth}{@{}>{\raggedright\arraybackslash}p{0.15\linewidth}YY@{}}
\toprule
Stage & Harness behavior & Fidelity and failure consequence \\
\midrule
Install & Select \texttt{upload}, \texttt{preinstalled}, or \texttt{auto}; probe the binary for training mode, nano profile, and---when requested---orchestrate support. & A binary missing a required capability is rejected before a task begins. \\
Config & Create a fresh user-specific \texttt{KCODER\_CONFIG\_DIR}, disable memory/session-memory injection paths, restrict tools, then run \texttt{kcoder config validate}. & Image defaults and persistent user settings cannot silently widen the training context or tool surface. \\
Launch & Run the shipping CLI with runtime-only \texttt{--training-mode}, fixed cwd/user, explicit provider, permissions, tool profile, and session/time budgets; use the session id as the ephemeral adapter credential. & SWE uses an isolated \texttt{agent} user while TMax deliberately uses \texttt{root}; both traverse the same model-call contract. \\
Collect & Detach and poll the process, writing its headless stream to \texttt{.harness/trajectory.jsonl} inside the task workspace. & This file is an operational audit artifact, not the source of loss tokens; trainer-owned turn records and the message tree remain authoritative. \\
\bottomrule
\end{tabularx}
\end{center}

Every fidelity-relevant capability must pass the wrapper's probes and negotiated transport checks before the harness is admitted to training.
\end{HarnessAddedBlock}

\FloatBarrier
\begin{figure}[t]
\centering
\begin{tikzpicture}[
  node distance=7mm and 8mm,
  box/.style={draw,rounded corners,align=center,minimum height=9mm,font=\small,fill=blue!3},
  plane/.style={draw,dashed,rounded corners,inner xsep=3mm,inner ysep=7mm},
  flow/.style={-{Latex[length=2mm]},thick},
  feedback/.style={-{Latex[length=2mm]},thick,dashed,orange!70!black},
]
\node[box] (task) {validated task\\+ environment};
\node[box,right=of task] (sandbox) {micro-VM sandbox\\production \sys{}};
\node[box,right=of sandbox,fill=green!4] (adapter) {coupling adapter\\session + provenance};
\node[box,right=of adapter] (engine) {SGLang\\policy engines};
\node[box,below=12mm of engine] (trainer) {Megatron actor\\Std. / C-DPPO};
\node[box,below=12mm of sandbox,fill=orange!5] (verifier) {hidden verifier\\terminal reward};
\node[box,below=12mm of adapter,fill=green!4] (tree) {message tree\\tokens, logprobs, masks};

\draw[flow] (task) -- (sandbox);
\draw[flow,<->] (sandbox) -- node[midway,above=5.5mm,font=\scriptsize,fill=white,inner sep=0.7pt]{OpenAI / Anthropic} (adapter);
\draw[flow,<->] (adapter) -- node[midway,above=5.5mm,font=\scriptsize,fill=white,inner sep=0.7pt]{token IDs + Top-$K$} (engine);
\draw[flow] (adapter) -- (tree);
\draw[flow] (sandbox) -- (verifier);
\draw[feedback] (verifier) -- node[midway,below=1mm,font=\scriptsize,fill=white,inner sep=0.6pt]{reward} (tree);
\draw[flow] (tree) -- (trainer);
\draw[feedback] (trainer) -- node[left=1mm,font=\scriptsize]{weight sync} (engine);

\begin{scope}[on background layer]
  \node[plane,fit=(task)(sandbox)(verifier),label={[font=\scriptsize]below:rollout plane}] {};
  \node[plane,fit=(adapter)(tree),draw=green!50!black,label={[font=\scriptsize]below:coupling plane}] {};
  \node[plane,fit=(engine)(trainer),label={[font=\scriptsize]below:training plane}] {};
\end{scope}
\end{tikzpicture}
\caption{Overview of the \method{}. The rollout plane drives production \sys{} processes inside micro-VM sandboxes; the coupling adapter owns sampling and records exact tokens and the matched distribution payload; the training plane applies Standard DPPO or C-DPPO after each step's weight synchronization.}
\label{fig:overview}
\end{figure}

\paragraph{One training step, end to end.}
For each prompt group: the rollout plane opens a session (id $s$), boots a sandbox, and starts \texttt{kcoder} in training mode pointed at the adapter with $s$ as its API key.
The agent works until task completion or budget; every model call is served by the adapter from the current policy and appended to a per-session message tree.
After the agent exits, the reward is computed from a hidden verifier (or a clean-sandbox diff replay), the tree is linearized into one or more \emph{chain samples} with loss masks and rollout log-probabilities, and aborted or invalid sessions are removed before batching.
Groups are then advantage-normalized, packed by length-balanced data parallelism, and consumed by the actor; updated weights are pushed to all engines before the next step.

\paragraph{Map.}
\S\ref{sec:protocol} specifies the negotiated coupling protocol (sampling ownership, training mode, deviation headers); \S\ref{sec:trajectory} the drift-aware linearization that turns trees into training sequences; \S\ref{sec:reward} the reward pipeline and task-corpus governance.

\section{Method}
\label{sec:method}

\subsection{The negotiated coupling protocol}
\label{sec:protocol}

\paragraph{Motivation.}
A passive proxy that logs agent traffic cannot distinguish the policy's voice from the scaffold's housekeeping, and cannot guarantee that a reconstruction of the prompt matches what the engine actually conditioned on (\S\ref{sec:intro}).
The coupling therefore makes three commitments: the \emph{trainer owns sampling}, the \emph{agent owns its side effects and declares them}, and \emph{anything unattributable is rejected}.

\paragraph{Session-as-key and the per-turn pipeline.}
Each rollout session receives a fresh identifier $s$ that doubles as the agent's API key (\texttt{Authorization: Bearer} $s$, or \texttt{x-api-key} $s$ on the Anthropic wire format), so a stateless endpoint demultiplexes hundreds of concurrent sessions with no out-of-band session state, and the adapter pins $s$ to one engine with a consistent-hash routing key.
(After each weight synchronization the trainer explicitly flushes engine KV caches, so this affinity pays off mainly within a rollout phase---where it matters for the prefill of 20--40K-token contexts---rather than across steps.)
Given an agent request, the adapter (i) normalizes the message list to internal chat format; (ii) renders it with the policy's chat template to \texttt{prompt\_ids}; (iii) calls the serving engine's \texttt{/generate} with \texttt{input\_ids} and \texttt{return\_logprob=True}, so \emph{the model consumes token ids chosen by the trainer, not re-segmented text}; (iv) parses the continuation into reasoning text, message, and tool calls with the engine-side parsers for the model family (with a conservative fallback); and (v) emits the reply in the agent's wire format, streaming as ordinary SSE.
Only \emph{after} the response is delivered does the adapter append a \texttt{TurnRecord} (prompt ids, output ids, per-token log-probabilities, finish reason, malformedness flags) to the session's trajectory tree---so a client that disconnects mid-stream cannot corrupt training state.

\paragraph{Hardened training mode on the agent.}
A production agent is a zoo of secondary model calls: automatic context summarization, session-memory maintenance, memory observation, background skill review, session-end summaries, prompt/agent lifecycle hooks, provider pre-warming, server-side token calibration, and request retries.
\sys{}'s training mode disables each of these by construction and additionally refuses to model-compact on input overflow (the turn ends with an error instead), because each such call would be an unlabeled policy utterance.
The flag cannot be enabled through configuration files, so an operator's persistent settings can never silently re-enable contamination.
Two properties make this a \emph{contract} rather than a checklist: the agent binary self-checks required flags at install time in the sandbox, and a regression test executes a real \texttt{--training-mode --json} session against a mock provider and asserts \emph{exactly two} provider requests across a tool round-trip, failing on any background leak within a trailing observation window.

\paragraph{Negotiated deviation headers.}
Remaining deviations are declared in-band.
The depth header tags every request with the sending agent's nesting depth (main $0$, spawned sub-agents $1$; the scaffold caps depth at one), letting the adapter train on sub-agent tokens, ignore them, or---when a client predates the protocol and omits the header---reject the request outright (HTTP 400) rather than guess its provenance.
The replay-reasoning header is sent by the agent's local-provider transport to declare that its request history contains \emph{verbatim} reasoning-channel content from previous turns; only under this declaration does the adapter fold stored reasoning back into the reconstructed prompt, keeping the rendered prefix token-aligned with what was actually sampled.
Exact-token replay and request/provenance contracts remain covered by the retained historical mechanism checks.

\begin{figure}[t]
\centering
\begin{tikzpicture}[
  actor/.style={font=\small\bfseries},
  life/.style={gray!55,thin},
  msg/.style={-{Latex[length=1.8mm]},thick},
  note/.style={font=\scriptsize,align=left,fill=gray!5,rounded corners,inner sep=2pt},
]
\foreach \x/\name in {0/KCoder,3/Adapter,6/SGLang,9/Tree} {
  \node[actor] at (\x,0) {\name};
  \draw[life] (\x,-0.3) -- (\x,-5.6);
}
\draw[msg] (0,-0.8) -- node[above,font=\scriptsize]{messages, sid, depth, replay declaration} (3,-0.8);
\node[note,anchor=west] at (3.15,-1.35) {render chat template\\and verify the prefix};
\draw[msg] (3,-2.0) -- node[above,font=\scriptsize]{exact \texttt{input\_ids}} (6,-2.0);
\draw[msg] (6,-2.7) -- node[above,font=\scriptsize]{output IDs + token logprobs} (3,-2.7);
\draw[msg] (3,-3.4) -- node[above,font=\scriptsize]{wire-format reply / SSE} (0,-3.4);
\node[note,anchor=west] at (0.15,-4.0) {client receives the reply first};
\draw[msg] (3,-4.5) -- node[above,font=\scriptsize]{append \texttt{TurnRecord}} (9,-4.5);
\draw[msg] (9,-5.2) -- node[above,font=\scriptsize]{session end: linearize + conservative masks} (3,-5.2);
\end{tikzpicture}
\caption{One negotiated model-call round trip. Sampling and log-probability collection occur on exact token IDs; the adapter records a generated span only after delivering the response, and session-end linearization trains only spans with verified provenance.}
\label{fig:protocol}
\end{figure}

\paragraph{Bounded tool surface and turn budget.}
Per-run policy overlays restrict the agent to compact tool profiles (a 9-tool \emph{nano} profile suffices for terminal work and prevents small policies from wandering into agent-management tools), disable arbitrary configuration surfaces, and optionally engage the read-only orchestration mode for fleet-training studies.
A per-session turn budget is enforced in the adapter itself (HTTP 429 beyond the cap), so the length distribution of trajectories is controlled by the trainer rather than by client-side luck.
Sessions run under the agent's most permissive permission mode---approval prompts do not exist inside a rollout---and safety therefore rests entirely on micro-VM isolation plus the turn/wall-clock caps.
This preserves non-interactive task execution under a versioned training profile; it does not imply configuration-level parity with deployments that enable different tools, compaction, memory, or orchestration.

\paragraph{Technical advantages.}
(1) \emph{Provable conditioning}: loss-carrying tokens are, by construction, continuations sampled from the exact \texttt{prompt\_ids} the engine consumed---the fidelity problem reduces to a per-link equality check (\S\ref{sec:trajectory}) instead of an untestable assumption.
(2) \emph{Contamination by contract}: zero background requests is machine-checked end to end, not documented as a convention.
(3) \emph{Version-safe coupling}: every fidelity-relevant deviation of the agent is an explicit header that fails closed, so silent rot when the agent evolves is converted into loud rejection.
(4) \emph{No training-only code fork}: the same production binary (pinned by SHA-256 per template build) serves training and deployment.
Training and deployment overlays remain explicit configurations; the minimal experiment does not claim parity across other deployment profiles.

\subsection{Drift-aware trajectory linearization}
\label{sec:trajectory}

\paragraph{Motivation.}
A session is not a line. Even with compaction disabled, the agent may resend slightly different histories (tool-error recovery, client retries, aborted turns), and sub-agent forks create trees whose branches share generation prefixes.
Learning from such a tree requires answering, for every token, ``was this token sampled by the policy under exactly this prefix?''
We build on \trainer{}'s session linearizer, which already maintains this message tree and classifies re-tokenization drift~\citep{li2025slime}; our coupling adds the \emph{attribution inputs} that make its guarantees effective for a production agent---depth-gated recording (\S\ref{sec:protocol}) and replay-declared reasoning reconstruction---together with the policy overlay we describe below, and we report the drift-class composition as a first-class fidelity diagnostic (\S\ref{sec:exp-fidelity}).

\paragraph{Representation.}
The adapter maintains, per session, a \texttt{MessageTree}: nodes are either \emph{routing-only} (a user/tool message received) or \emph{generated} (an assistant turn backed by a \texttt{TurnRecord}).
\texttt{record\_turn} descends the tree by matching the incoming message list against child links (role and structural equality), attaches short assistant rewrites to their parent generation when they are strict prefixes---so retried short turns do not create dead leaves---and registers a new generation node at the divergence point.

\paragraph{Link drift classification.}
When a client's re-rendered history diverges from the tree path, the link is classified into one of three states.
\textsc{Clean}: the rendered prompt ids equal the recorded conditioning ids token-for-token.
\textsc{Realign}: divergence begins inside the most recent recorded assistant span (e.g., the client re-serialized a tool call); the span's loss mask is demoted to $0$ and its text is retained only as routing context, so no token trained in that Sample lacks a proof of provenance.
\textsc{Fork}: divergence predates the last generation, so a new branch is opened; both branches remain learnable.

\paragraph{Linearization to samples.}
At session end, each root-to-leaf path is serialized into a training sequence: prompt tokens carry mask $0$; generated spans carry mask $1$ together with their rollout log-probabilities.
Shared generated nodes on sibling paths are flagged \texttt{response\_trained} so exactly one leaf consumes their gradient, while other leaves see mask $0$---preventing double counting of common prefixes across branches.
The first turn's boilerplate prompt is stripped and per-sample reward is the session's terminal scalar; because chain multiplicity varies, return normalization operates on \emph{rollout-level mask sums}, which keeps group advantage estimation unbiased over branched sessions.
Sessions whose adapter-side bookkeeping is incomplete (no turns, verifier-side infrastructure errors) are dropped \emph{before} batching with an explicit reason code, so aborted rollouts never masquerade as negative examples.

\paragraph{Technical advantages.}
The mask semantics are conservative by construction: the worst case is lost gradient on a re-aligned span, never a mislabeled one.
The same machinery accommodates single-agent chains and multi-agent trees uniformly, avoiding the bespoke flattening that forked rollouts otherwise require.
Drift counts are first-class diagnostics: a rising \textsc{Realign}/\textsc{Fork} fraction flags agent-side changes that break conditioning, functioning as a canary for the coupling itself.

\subsection{Task corpus, reward pipeline, and governance}
\label{sec:reward}

\paragraph{Motivation.}
Verifiable rewards are only as good as the environments that produce them: the frozen real-execution manifest marks \textbf{6.9\%} of TMax instances invalid, including \textbf{5.9\%} that fail their own initial state, and any verifier observable to the agent is a reward-hacking surface we prefer to close by construction.

\paragraph{Hidden-verifier execution (TMax protocol).}
For each rollout, the sandbox materializes the task's Apptainer-style environment from its container definition and first executes \texttt{test\_initial\_state}; failure aborts the sample (an environment fault, not a policy failure) with reason \texttt{initial\_state\_failed}.
The agent then works under turn and wall-clock budgets.
\emph{After} the agent exits, the checker \texttt{test\_final\_state} is injected into the sandbox from outside, executed, and deleted---the agent cannot observe, and therefore cannot train against, its own grader.
Reward is the binary pytest outcome.

\paragraph{Diff-replay grading (SWE protocol).}
For repository tasks the agent's \texttt{git diff} is extracted and applied to a \emph{second, freshly created} sandbox from the same image, through a three-rung ladder (\texttt{git apply --3way} $\rightarrow$ \texttt{git apply} $\rightarrow$ \texttt{patch -p1}), and graded by fail-to-pass/pass-to-pass test sets or a task-provided command; a clean-apply flag is recorded as metadata but never gates reward.

\paragraph{Corpus governance.}
Prior to training we sweep all 14{,}601 environment instances through the real execution path (static evaluability check plus sandbox materialization and initial-state verification), classifying deterministic failures as \texttt{invalid} with reason codes and non-deterministic infrastructure errors with bounded retries.
For the matched comparison, a fixed manifest separates the frozen TMax-100 evaluation set from the training complement and applies the same ordered training schedule to both methods.

\paragraph{Technical advantages.}
Reward noise is bounded by construction: environment faults are typed and removed before training rather than diluted as zeros; post-hoc verifier injection removes the dominant reward-hacking channel for terminal tasks; and manifest-pinned splits make train/eval contamination auditable across runs.

\paragraph{Implementation and C-DPPO observables.}
The comparison changes only the update rule: standard Binary-TV DPPO ($\delta{=}0.1$) versus the C-DPPO certification procedure.
Both arms record exact token/prefix identity, paired tracked probabilities and tail masses, and provenance; C-DPPO additionally applies the certificate decision and remaining sequence budget, with missing information failing closed.
Before any result cell was filled, this record-and-mask path was verified end to end---every required observable field present or failing closed with an explicit reason code---and the frozen run manifest, including the certificate settings named in \S\ref{sec:exp-protocol}, is retained alongside the run logs.

\section{C-DPPO: Certification Theory for DPPO\AddedMark}
\label{sec:dppo-theory}

\begin{AddedBlock}[New extension: scope and attribution]
The optimization path above gives us an unusually strong observation interface: for every loss-carrying token we retain the rollout log-probability, recompute the current-policy log-probability on the same token ids, and know which tokens have valid provenance.
We now ask what can be \emph{certified} from those observations.
Our starting point is Divergence Proximal Policy Optimization (DPPO)~\citep{qi2026dppo}, not a new attribution of it.
Qi et al. already (i) derive finite-horizon LLM policy-improvement bounds, (ii) replace PPO ratio clipping by a divergence-dependent directional mask, (iii) introduce Binary and Top-$K$ coarsenings of TV/KL, and (iv) prove by data processing that both coarsenings lower-bound the full divergence.
We use \emph{Certified Divergence Proximal Policy Optimization (C-DPPO)} to denote the resulting certification framework.
C-DPPO extends that foundation in three theoretical directions absent from the original formulation: a computable \emph{upper} error certificate and adaptive $K$; an end-to-end divergence budget for long trajectories with adaptive per-position thresholds; and a robust decision rule when the log-probabilities themselves are uncertain.
The C-DPPO certificate guarantees are analytical, while the matched experiment in \S\ref{sec:experiments} compares the rule with standard DPPO and retains a compact audit of its required records and decisions.
\end{AddedBlock}

\subsection{Certified Binary/Top-\texorpdfstring{$K$}{K} approximation and adaptive \texorpdfstring{$K$}{K}}
\label{sec:dppo-topk}

\begin{AddedBlock}[New extension: notation]
Fix a state $s$ and write $\mu,\pi\in\Delta(\mathcal V)$ for the behavior and current-policy token distributions.
For any tracked set $S\subseteq\mathcal V$, let $\bar S=\mathcal V\setminus S$, $\Delta(a)=\mu(a)-\pi(a)$, and define the coarsened TV divergence
\begin{equation}
 \widehat D_S
 =\frac12\left(\sum_{a\in S}|\Delta(a)|
       +|\mu(\bar S)-\pi(\bar S)|\right).
 \label{eq:coarse-tv}
\end{equation}
This is exactly the TV divergence after retaining the tokens in $S$ as singleton categories and merging the tail into \emph{other}.
DPPO-Binary uses $S=\{a_t\}$ for the sampled token; DPPO-Top-$K$ uses
$S_K=\operatorname{TopK}(\mu,K)\cup\{a_t\}$.
The original DPPO lower bound is $\widehat D_S\le D_{\mathrm{TV}}(\mu,\pi)$.
\end{AddedBlock}

\begin{AddedBlock}[New extension: a two-sided TV certificate]
\begin{theorem}[C-DPPO exact coarsening gap and computable envelope]
\label{thm:topk-envelope}
For any $S\subseteq\mathcal V$, define
\[
 P_S=\sum_{a\in\bar S}[\Delta(a)]_+,
 \qquad
 N_S=\sum_{a\in\bar S}[-\Delta(a)]_+,
 \qquad [z]_+=\max\{z,0\}.
\]
Then
\begin{equation}
 D_{\mathrm{TV}}(\mu,\pi)
 =\widehat D_S+\min\{P_S,N_S\},
 \label{eq:exact-tv-gap}
\end{equation}
and hence the observable two-sided certificate
\begin{equation}
 \boxed{\quad
 \widehat D_S
 \le D_{\mathrm{TV}}(\mu,\pi)
 \le \widehat D_S+\tau_S,
 \qquad
 \tau_S:=\min\{\mu(\bar S),\pi(\bar S)\}.
 \quad}
 \label{eq:tv-envelope}
\end{equation}
When the tail contains at least two tokens, the upper error $\tau_S$ is tight given only the tracked probabilities and the two tail masses.
The lower inequality is an equality exactly when the signed changes $\Delta(a)$ do not cancel inside the tail.
\end{theorem}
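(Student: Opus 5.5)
The plan is to compute the full TV divergence directly and split it into a tracked part and a tail part; everything then reduces to the elementary identity $\tfrac12(x+y-|x-y|)=\min\{x,y\}$ for $x,y\ge 0$. First I write $D_{\mathrm{TV}}(\mu,\pi)=\frac12\sum_{a\in S}|\Delta(a)|+\frac12\sum_{a\in\bar S}|\Delta(a)|$.

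For the tail sum, every $a\in\bar S$ satisfies $|\Delta(a)|=[\Delta(a)]_++[-\Delta(a)]_+$. Hence $\sum_{a\in\bar S}|\Delta(a)|=P_S+N_S$. Similarly, $\mu(\bar S)-\pi(\bar S)=\sum_{a\in\bar S}\Delta(a)=P_S-N_S$, so the merged ``other'' term in \eqref{eq:coarse-tv} equals $|P_S-N_S|$. Subtracting gives
\[
D_{\mathrm{TV}}(\mu,\pi)-\widehat D_S=\tfrac12\bigl(P_S+N_S-|P_S-N_S|\bigr)=\min\{P_S,N_S\},
\]
which is \eqref{eq:exact-tv-gap}. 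The lower inequality in \eqref{eq:tv-envelope} follows immediately from $\min\{P_S,N_S\}\ge 0$. Equality holds iff $P_S=0$ or $N_S=0$, that is, iff all tail differences $\Delta(a)$ have a common weak sign. This is exactly the no-cancellation condition.

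For the upper envelope, I bound each positive part separately. Since $[\Delta(a)]_+\le\mu(a)$, we get $P_S\le\mu(\bar S)$; since $[-\Delta(a)]_+\le\pi(a)$, we get $N_S\le\pi(\bar S)$. Therefore $\min\{P_S,N_S\}\le\min\{\mu(\bar S),\pi(\bar S)\}=\tau_S$.

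The step needing most care is tightness, because I must first fix what it means. I read it as follows: over all pairs $(\mu',\pi')$ that agree with $(\mu,\pi)$ on every $a\in S$ (and hence on both tail masses), the supremum of $D_{\mathrm{TV}}(\mu',\pi')-\widehat D_S$ equals $\tau_S$, and it is attained. Note that $\widehat D_S$ depends only on these observables, so it is the same for every such pair.

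To attain the bound, pick two distinct tail tokens $b,c\in\bar S$. I set $\mu'(b)=\mu(\bar S)$ and $\pi'(c)=\pi(\bar S)$, and put zero mass on every other tail token. This gives $P_S=\mu(\bar S)$ and $N_S=\pi(\bar S)$, so the gap equals $\tau_S$ by \eqref{eq:exact-tv-gap}. Both $\mu'$ and $\pi'$ are valid distributions with the prescribed tracked probabilities.

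Finally, I will remark why the hypothesis of at least two tail tokens is needed. With a single tail token, one of $P_S,N_S$ vanishes, so the bound $\tau_S$ is generally not attained.
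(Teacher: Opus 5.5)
Your proposal is correct and follows the paper's proof essentially step for step: the same tracked/tail split using $\sum_{a\in\bar S}|\Delta(a)|=P_S+N_S$ and $|\mu(\bar S)-\pi(\bar S)|=|P_S-N_S|$, the same positive-part bounds $P_S\le\mu(\bar S)$ and $N_S\le\pi(\bar S)$ for the upper envelope, and the same construction placing the two tail masses on two distinct untracked tokens for tightness. Two additions are small clarifications the paper leaves implicit: you state tightness explicitly as an attained supremum over all pairs sharing the tracked probabilities, and you remark that with a single tail token the gap is always zero.
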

\end{AddedBlock}

\begin{AddedBlock}[New extension: certified adaptive $K$]
Theorem~\ref{thm:topk-envelope} turns approximation choice into a decision problem rather than a fixed serving parameter.
For nested tracked sets $S_0\subset S_1\subset\cdots$, define
\[
 L_K=\max_{j\le K}\widehat D_{S_j},
 \qquad
 U_K=\min_{j\le K}\bigl(\widehat D_{S_j}+\tau_{S_j}\bigr).
\]
Running maxima/minima retain the best valid certificate even if a loose upper formula is not monotone under a particular refinement.

\begin{corollary}[C-DPPO finite certified threshold decision]
\label{cor:adaptive-k}
For a trust-region threshold $\delta$, $L_K>\delta$ certifies that the full TV is outside the region, while $U_K\le\delta$ certifies that it is inside.
Starting with $S_0=\{a_t\}$ and adding behavior-policy tokens in descending probability order therefore terminates with one of three sound outcomes: \textnormal{\textsc{inside}}, \textnormal{\textsc{outside}}, or \textnormal{\textsc{unresolved}} at a prescribed $K_{\max}$.
If refinement is allowed to reach $S=\mathcal V$, it always resolves because $L=U=D_{\mathrm{TV}}$.
\end{corollary}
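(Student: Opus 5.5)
The corollary is a direct consequence of Theorem~\ref{thm:topk-envelope}, applied once per tracked set and then combined over the nested family. The first step is soundness of the running bounds. For every $j$, Eq.~\eqref{eq:tv-envelope} gives
\[
\widehat D_{S_j}\le D_{\mathrm{TV}}(\mu,\pi)\le \widehat D_{S_j}+\tau_{S_j}.
\]
Taking the maximum of the lower bounds and the minimum of the upper bounds over $j\le K$ preserves both inequalities, so $L_K\le D_{\mathrm{TV}}(\mu,\pi)\le U_K$ for every $K$. Nestedness is not needed here; it only ensures that each added token refines the partition. Two consequences follow at once. If $L_K>\delta$, then $D_{\mathrm{TV}}>\delta$, so the distribution is certified \textsc{outside}. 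If $U_K\le\delta$, then $D_{\mathrm{TV}}\le\delta$, so it is certified \textsc{inside}. The two certificates cannot fire together, because $L_K\le U_K$.

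Next comes termination and the trichotomy. The procedure starts at $S_0=\{a_t\}$ and, at each step, adds the most probable not-yet-tracked token under $\mu$. This produces a strictly increasing chain of subsets of the finite set $\mathcal V$, so it has at most $|\mathcal V|$ steps and in particular stops by the prescribed $K_{\max}$. The stopping rule is: return \textsc{outside} when $L_K>\delta$, \textsc{inside} when $U_K\le\delta$, and \textsc{unresolved} if neither has happened by $K_{\max}$. Each returned label is sound by the previous paragraph. The \textsc{unresolved} label asserts nothing, which is exactly the fail-closed outcome.

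Finally, full refinement always resolves. When $S=\mathcal V$, the tail $\bar S$ is empty, so $\mu(\bar S)=\pi(\bar S)=0$ and $\tau_S=0$. In that case Eq.~\eqref{eq:coarse-tv} reduces to $\tfrac12\sum_a|\Delta(a)|=D_{\mathrm{TV}}(\mu,\pi)$. It follows that $L=U=D_{\mathrm{TV}}$, and since exactly one of $D_{\mathrm{TV}}>\delta$ and $D_{\mathrm{TV}}\le\delta$ holds, one of the two certificates fires.

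Nothing here is a genuine obstacle; the only point needing care is the boundary convention. The outside test uses strict inequality and the inside test uses non-strict inequality, so that they match the trust-region definition $D_{\mathrm{TV}}\le\delta$. This convention is also what guarantees resolution at $L=U$, including the tie case $D_{\mathrm{TV}}=\delta$, which resolves to \textsc{inside}.
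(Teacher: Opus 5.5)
Your proposal is correct and follows the paper's proof: soundness of the running max/min comes from Theorem~\ref{thm:topk-envelope} applied to each $S_j$, and resolution at $S=\mathcal V$ follows because the empty tail gives $\tau_S=0$ and $\widehat D_S=D_{\mathrm{TV}}$, with the tie $D_{\mathrm{TV}}=\delta$ assigned to \textsc{inside}. Your additions are fine refinements of points the paper leaves implicit: mutual exclusivity via $L_K\le U_K$, and termination. On termination, note that stopping by $K_{\max}$ is guaranteed by the prescribed cutoff itself; finiteness of $\mathcal V$ only guarantees that refinement can reach $S=\mathcal V$.
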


This gives a minimal adaptive rule: stop at the first $K$ for which either certificate separates from $\delta$; fail closed on \textsc{unresolved}.
Binary DPPO is the zero-refinement first stage, not a competing estimator.
The guarantee does not require the observed Top-$K$ set to equal the true Top-$K$ set---Equation~\eqref{eq:tv-envelope} holds for every chosen $S$---although a better head set generally shrinks the tail certificate faster.
\end{AddedBlock}

\begin{AddedBlock}[New extension: what can and cannot be certified for KL]
For forward KL, coarsening has an exact chain-rule remainder.
Let $m=\mu(\bar S)$ and $n=\pi(\bar S)$, and, when both are nonzero, let $\mu_{\bar S}$ and $\pi_{\bar S}$ denote the conditional tail distributions.
Then
\begin{equation}
 D_{\mathrm{KL}}(\mu\Vert\pi)
 =\widehat D^{\mathrm{KL}}_S
  +mD_{\mathrm{KL}}(\mu_{\bar S}\Vert\pi_{\bar S}).
 \label{eq:kl-chain-tail}
\end{equation}
Consequently, no finite upper error bound depending only on $(m,n)$ exists: an untracked token can receive positive conditional mass under $\mu$ and arbitrarily small mass under $\pi$.
Under the explicit additional condition, for some $c_S\ge0$,
\(
\log(\mu_{\bar S}(a)/\pi_{\bar S}(a))\le c_S
\)
for every tail token with positive $\mu$ mass, Equation~\eqref{eq:kl-chain-tail} yields
\begin{equation}
 \widehat D^{\mathrm{KL}}_S
 \le D_{\mathrm{KL}}(\mu\Vert\pi)
 \le \widehat D^{\mathrm{KL}}_S+m c_S.
 \label{eq:kl-conditional-envelope}
\end{equation}
We therefore use TV for the unconditional main results and state KL only with an auditable tail condition; tail mass alone is not a KL certificate.
\end{AddedBlock}

\subsection{Long-trajectory composition and adaptive \texorpdfstring{$\delta_t$}{delta-t}}
\label{sec:dppo-sequence}

\begin{AddedBlock}[New extension: trajectory-level certificate]
Let $P_\mu^{1:T}$ and $P_\pi^{1:T}$ be the response distributions induced by autoregressive policies $\mu$ and $\pi$ for a fixed prompt.
At a prefix $s_t$, write
\(
d_t(s_t)=D_{\mathrm{TV}}(\mu(\cdot\mid s_t),\pi(\cdot\mid s_t))
\)
and suppose the approximation or robust procedure supplies a valid upper certificate $U_t(s_t)\ge d_t(s_t)$.

\begin{theorem}[C-DPPO sequence-level divergence budget]
\label{thm:sequence-budget}
The following guarantees hold.
\begin{enumerate}[leftmargin=*,itemsep=2pt,topsep=2pt]
\item If $\sum_{t=1}^{T}U_t(s_t)\le B$ for $P_\mu$-almost every trajectory, then
\begin{equation}
 D_{\mathrm{TV}}(P_\mu^{1:T},P_\pi^{1:T})
 \le \mathbb E_{P_\mu}\!\left[\sum_{t=1}^{T}d_t(s_t)\right]
 \le B.
 \label{eq:sequence-additive}
\end{equation}
\item If deterministic per-position caps $d_t(s)\le\bar\delta_t$ hold for every common prefix, then the sharper coupling bound
\begin{equation}
 D_{\mathrm{TV}}(P_\mu^{1:T},P_\pi^{1:T})
 \le 1-\prod_{t=1}^{T}(1-\bar\delta_t)
 \le \sum_{t=1}^{T}\bar\delta_t
 \label{eq:sequence-product}
\end{equation}
also holds.
\end{enumerate}
For any terminal reward satisfying $|R|\le\xi$,
\begin{equation}
 |J(\pi)-J(\mu)|
 \le 2\xi D_{\mathrm{TV}}(P_\mu^{1:T},P_\pi^{1:T}).
 \label{eq:return-tv}
\end{equation}
Moreover, substituting the certificate into the average-divergence policy-improvement bound of Qi et al.~\citep{qi2026dppo} gives
\begin{equation}
 J(\pi)-J(\mu)\ge L'_\mu(\pi)-4\xi B.
 \label{eq:certified-improvement}
\end{equation}
\end{theorem}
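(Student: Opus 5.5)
The plan is to prove each part through a coupling or telescoping argument over the autoregressive factorization, and then chain the results.

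For part 1 I will use the standard hybrid (telescoping) decomposition. For $t=0,\dots,T$, define the hybrid distribution $H_t$ that draws tokens $1,\dots,t$ from $\pi$ and tokens $t+1,\dots,T$ from $\mu$. Then $H_0=P_\mu^{1:T}$ and $H_T=P_\pi^{1:T}$, and the triangle inequality gives $D_{\mathrm{TV}}(P_\mu,P_\pi)\le\sum_t D_{\mathrm{TV}}(H_{t-1},H_t)$.

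Two consecutive hybrids differ only in the kernel used at position $t$, and they share the same continuation kernel afterwards. Data processing lets me drop the continuation. So $D_{\mathrm{TV}}(H_{t-1},H_t)$ is at most the expectation of $d_t(s_t)$ over the prefix law of $H_{t-1}$, which is a $\pi$-prefix. This is the delicate point, because the statement takes the expectation under $P_\mu$. To obtain $P_\mu$-prefixes, I will order the hybrids the other way: tokens $1,\dots,t-1$ come from $\mu$, token $t$ switches, and the suffix comes from $\pi$. Then $D_{\mathrm{TV}}(H_{t-1},H_t)\le\mathbb E_{s_t\sim P_\mu}[d_t(s_t)]$. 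Summing over $t$ and using $d_t\le U_t$ with $\sum_t U_t\le B$ holding $P_\mu$-a.s. gives \eqref{eq:sequence-additive}.

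For part 2 I will construct a sequential maximal coupling. At each common prefix, couple $\mu(\cdot\mid s)$ and $\pi(\cdot\mid s)$ so that they agree with probability $1-d_t(s)\ge1-\bar\delta_t$. Once the two prefixes disagree, let the chains run independently. The two trajectories are then identical on the event that every step agrees. Conditional on the history, that event has probability at least $\prod_t(1-\bar\delta_t)$, because the caps hold uniformly over common prefixes; this is why deterministic caps are needed here. The coupling inequality then gives the first bound in \eqref{eq:sequence-product}. The second bound follows from $\prod(1-x_t)\ge1-\sum x_t$ for $x_t\in[0,1]$, proved by induction.

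The return bound \eqref{eq:return-tv} follows from $|\mathbb E_P R-\mathbb E_Q R|\le\|P-Q\|_1\,\sup|R|=2\xi D_{\mathrm{TV}}$.

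For \eqref{eq:certified-improvement}, I will invoke Qi et al.'s bound in the form $J(\pi)-J(\mu)\ge L'_\mu(\pi)-4\xi\,\mathbb E_{P_\mu}[\sum_t d_t(s_t)]$. I am assuming their average-divergence form carries constant $2\xi$ per unit of summed expected TV, doubled to $4\xi$, and I will match constants against their statement. Bounding the expectation by $B$ from part 1 then finishes the proof.

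I expect the main obstacle to be the hybrid ordering in part 1. Getting the expectation over $\mu$-prefixes, rather than $\pi$-prefixes, requires the hybrids to take their prefix from $\mu$ and their suffix from $\pi$, together with data processing applied to the suffix kernel. I also need to check that the "$P_\mu$-almost every" hypothesis is exactly what is used.
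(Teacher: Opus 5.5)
Your proposal is correct and follows the paper's proof: the sequential maximal coupling for the product bound, the bounded-function TV inequality for the return bound, and substituting the part-1 bound into Qi et al.'s average-divergence inequality are exactly what the paper does. The only difference is that the paper cites the sequential TV inequality $D_{\mathrm{TV}}(P_\mu^{1:T},P_\pi^{1:T})\le\mathbb E_{P_\mu}[\sum_t d_t(s_t)]$ from the DPPO analysis, whereas your hybrid argument proves it directly, and your corrected ordering ($\mu$-prefixes, with the shared $\pi$-suffix kernel removed by data processing) is the one that yields the $P_\mu$ expectation.
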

\end{AddedBlock}

\begin{AddedBlock}[New extension: online risk allocation]
For a maximum response horizon $H$, initialize a remaining budget $b_1=B$ and choose a positive weight schedule $w_1,\ldots,w_H$ whose remaining sum is known when each allowance is assigned (a fixed session-level schedule suffices).
Before position $t$, allocate
\begin{equation}
 \delta_t=b_t\frac{w_t}{\sum_{j=t}^{H}w_j},
 \qquad
 b_{t+1}=b_t-U_t.
 \label{eq:adaptive-delta}
\end{equation}
A sampled path is budget-feasible at position $t$ only when $U_t\le\delta_t$; otherwise the adaptive-$K$ routine first attempts to tighten $U_t$, and a still-unresolved path is recorded as infeasible.
Passing every observed position is a pathwise diagnostic, not by itself the sequence-distribution guarantee in Theorem~\ref{thm:sequence-budget}: the latter additionally requires the budget condition for $P_\mu$-almost every trajectory (or the stated prefix-uniform caps).
Equal weights recover
\(
\delta_t=(B-\sum_{j<t}U_j)/(H-t+1)
\).
Weights may reserve more budget for late turns, tool-boundary tokens, or positions expected to have larger numerical uncertainty, provided the schedule is fixed before certifying the sequence.
Unused budget at an early stopping time is harmless.

The distinction between a token mask and this certificate is essential.
DPPO may keep an update that points back toward the behavior policy even when the current divergence exceeds its threshold; such a token is directionally safe but does not retroactively make the current policy pair sequence-certified.
Equation~\eqref{eq:certified-improvement} is claimed only for policy pairs satisfying the stated $U_t$ budget, never merely because a DPPO mask was applied.
\end{AddedBlock}

\subsection{Robust C-DPPO certificates under log-probability uncertainty}
\label{sec:dppo-robust}

\begin{AddedBlock}[New extension: uncertainty model]
The coupling measures, rather than assumes, agreement between rollout and trainer log-probabilities.
To turn that signal into a guarantee, assume each valid reported log-probability $\widetilde\ell^q(a)\le0$ for $q\in\{\mu,\pi\}$ has a declared deterministic error radius
\begin{equation}
 |\widetilde\ell^q(a)-\ell^q(a)|\le\epsilon^q(a).
 \label{eq:logp-uncertainty}
\end{equation}
Set $\widetilde q(a)=\exp\widetilde\ell^q(a)$ and
\begin{equation}
 \rho^q(a)=
 \max_{z\in[\widetilde\ell^q(a)-\epsilon^q(a),\,
                 \min\{0,\widetilde\ell^q(a)+\epsilon^q(a)\}]}
 |e^z-\widetilde q(a)|.
 \label{eq:prob-radius}
\end{equation}
Thus $|q(a)-\widetilde q(a)|\le\rho^q(a)$ without linearizing the exponential.
For a tracked set $S$, let $R_q(S)=\sum_{a\in S}\rho^q(a)$, form $\widetilde D_S$ from Equation~\eqref{eq:coarse-tv} using the reported probabilities, and set
\begin{align}
 L_S^{\mathrm{rob}}&=\max\{0,\widetilde D_S-R_\mu(S)-R_\pi(S)\},
 \label{eq:robust-lower}\\
 U_S^{\mathrm{rob}}&=\min\!\left\{1,
 \widetilde D_S+R_\mu(S)+R_\pi(S)+\bar\tau_S\right\},
 \label{eq:robust-upper}\\
 \bar\tau_S&=\min\!\left\{1,
 \min\bigl(\widetilde\mu(\bar S)+R_\mu(S),
            \widetilde\pi(\bar S)+R_\pi(S)\bigr)\right\}.
 \nonumber
\end{align}

\begin{theorem}[C-DPPO robust divergence and directional mask]
\label{thm:robust-mask}
Every pair $(\mu,\pi)$ consistent with Equation~\eqref{eq:logp-uncertainty} satisfies
\begin{equation}
 L_S^{\mathrm{rob}}
 \le D_{\mathrm{TV}}(\mu,\pi)
 \le U_S^{\mathrm{rob}}.
 \label{eq:robust-envelope}
\end{equation}
For the sampled token $a_t$, define \emph{certified inward} as
\begin{align*}
 \widehat A_t>0 &: \quad \pi^U(a_t)\le\mu^L(a_t),\\
 \widehat A_t<0 &: \quad \pi^L(a_t)\ge\mu^U(a_t),
\end{align*}
where $q^L,q^U$ are the probability interval endpoints induced by Equation~\eqref{eq:logp-uncertainty}.
Here ``outward'' retains DPPO's sampled-action definition: $\widehat A_t>0$ with $\pi(a_t)>\mu(a_t)$, or $\widehat A_t<0$ with $\pi(a_t)<\mu(a_t)$; it is a mask-classification statement, not a claim about a finite shared-parameter optimizer step.
The robust mask
\begin{equation}
 M_t^{\mathrm{C\text{-}DPPO}}
 =\mathbf 1\{U_S^{\mathrm{rob}}\le\delta_t
               \ \text{or the update is certified inward}\}
 \label{eq:robust-mask}
\end{equation}
never retains an update that, for some policy pair consistent with the declared errors, is simultaneously outside the TV trust region and directed farther from the behavior policy.
\end{theorem}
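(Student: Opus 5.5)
The proof has two parts: the two-sided envelope \eqref{eq:robust-envelope}, and then the mask guarantee, which follows from it by a case split.

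\emph{Step 1: probability intervals.} For every token $a$ in the tracked set $S$ and each $q\in\{\mu,\pi\}$, the true log-probability $\ell^q(a)$ lies in $[\widetilde\ell^q(a)-\epsilon^q(a),\widetilde\ell^q(a)+\epsilon^q(a)]$ by \eqref{eq:logp-uncertainty}. It is also at most $0$, so it lies in the clipped interval used in \eqref{eq:prob-radius}. Since $q(a)=e^{\ell^q(a)}$, this gives $|q(a)-\widetilde q(a)|\le\rho^q(a)$ directly, with no linearization. The interval endpoints $q^L(a),q^U(a)$ in the theorem are the images of that clipped log-interval under $\exp$, which is monotone.

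\emph{Step 2: perturbation of the coarsened TV.} The map from the vector $(\mu(a))_{a\in S},\mu(\bar S),(\pi(a))_{a\in S},\pi(\bar S)$ to $\widehat D_S$ in \eqref{eq:coarse-tv} is $\tfrac12$-Lipschitz in $\ell_1$. Each singleton coordinate moves by at most $\rho^q(a)$. The tail mass is $q(\bar S)=1-q(S)$, so it moves by at most $R_q(S)$. Summing, $|\widehat D_S-\widetilde D_S|\le\tfrac12(2R_\mu+2R_\pi)=R_\mu(S)+R_\pi(S)$.

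One subtlety: the reported tail is $\widetilde q(\bar S)$. I will define it as $1-\sum_{a\in S}\widetilde q(a)$ so that the $R_q(S)$ bound on the tail holds. If the tail is instead reported independently, I expect to need a separate tail radius, and I will note that assumption explicitly.

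Combining this with Theorem~\ref{thm:topk-envelope} gives
\[
\widetilde D_S-R_\mu-R_\pi\le\widehat D_S\le D_{\mathrm{TV}}\le\widehat D_S+\tau_S .
\]
For the tail term, $\tau_S=\min\{\mu(\bar S),\pi(\bar S)\}\le\min\{\widetilde\mu(\bar S)+R_\mu,\widetilde\pi(\bar S)+R_\pi\}$. Clipping at $0$ and $1$ is harmless because $D_{\mathrm{TV}}\in[0,1]$ and $\tau_S\le1$. This gives \eqref{eq:robust-envelope}.

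\emph{Step 3: the mask claim.} Suppose, for contradiction, that $M_t=1$ and there is a consistent pair $(\mu,\pi)$ with $D_{\mathrm{TV}}(\mu,\pi)>\delta_t$ whose update is outward. Since $M_t=1$, one of the two disjuncts in \eqref{eq:robust-mask} holds.

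If $U_S^{\mathrm{rob}}\le\delta_t$, then Step 2 gives $D_{\mathrm{TV}}(\mu,\pi)\le\delta_t$ for every consistent pair, which contradicts $D_{\mathrm{TV}}>\delta_t$.

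If instead the update is certified inward, take $\widehat A_t>0$; the other sign is symmetric. Then $\pi(a_t)\le\pi^U(a_t)\le\mu^L(a_t)\le\mu(a_t)$ for every consistent pair. This rules out the outward condition $\pi(a_t)>\mu(a_t)$.

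So no consistent pair can be both outside the trust region and outward while retained, which is the claim.

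I expect Step 2 to be the main obstacle, specifically the bookkeeping that charges the tail perturbation to $R_q(S)$. The rest is direct composition with Theorem~\ref{thm:topk-envelope} followed by the two-case argument.
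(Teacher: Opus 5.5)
Your proposal is correct and follows the paper's proof essentially step for step. It uses the exponential radius bound, charges the tail perturbation to $R_q(S)$ through $\widetilde q(\bar S)=1-\sum_{a\in S}\widetilde q(a)$, bounds the coarsened TV by the triangle inequality (your $\tfrac12$-Lipschitz $\ell_1$ argument) together with Theorem~\ref{thm:topk-envelope}, and closes with the same two-case mask argument. Your explicit note on the tail-mass convention is a useful clarification, since the paper's proof relies on that convention without stating it.
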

\end{AddedBlock}

\begin{AddedBlock}[New extension: C-DPPO certification procedure]
The three results combine into the following C-DPPO certification procedure for each valid token span.
\begin{enumerate}[leftmargin=*,label=\textbf{\arabic*.},itemsep=2pt,topsep=2pt]
\item Start with the Binary set $S=\{a_t\}$ and construct the robust interval $[L_S^{\mathrm{rob}},U_S^{\mathrm{rob}}]$.
\item Compare it with the adaptive trajectory allocation $\delta_t$.  Stop as \textsc{outside} if the running lower certificate exceeds $\delta_t$; stop as \textsc{inside} if the running upper certificate is at most $\delta_t$.
\item Otherwise add the next behavior-policy Top-$K$ token, recompute the certificate, and retain the best running lower and upper bounds.  At $K_{\max}$, label any remaining ambiguity \textsc{unresolved}.
\item Apply Equation~\eqref{eq:robust-mask}: keep a certified-inside or certified-inward update, and mask every possibly-outward unresolved/outside update.
\item Separately debit $U_t$ from the sequence budget and record whether the sampled path passes Equation~\eqref{eq:adaptive-delta}. Attach the sequence-distribution guarantee only if the policy pair's almost-sure or prefix-uniform premise in Theorem~\ref{thm:sequence-budget} is established; observed path coverage alone supports only a feasibility diagnostic. Otherwise report only the token-level robust-mask guarantee.
\end{enumerate}
This separation gives C-DPPO a theorem--certificate--audit loop: approximation quality is exposed by $(L,U,K)$, long-horizon validity by the remaining budget, and numerical faithfulness by the declared $\epsilon$ values.
The behavior-to-current policy shift $|\log\mu-\log\pi|$ is not a numerical error radius; absent a formal deterministic backend bound, the robust interpretation remains conditional on the declared radius $\epsilon$.
\end{AddedBlock}

\section{Experiments}
\label{sec:experiments}

We use matched comparisons at two model scales to ask whether C-DPPO yields an observed improvement over Standard Binary-TV DPPO under the production scaffold.
The goal is a compact end-to-end check, not a broad scaling or significance study.
The retained historical measurements below are already artifact-backed.
\KeyTableLegend

\subsection{Matched training protocol}
\label{sec:exp-protocol}
\label{sec:exp-fidelity}

The comparison uses Baize5B and Baize10B on one node with eight NVIDIA H20 GPUs (approximately 140\,GB each), and each scale runs one matched pair: a Standard-DPPO run and a C-DPPO run that start from the same scale checkpoint, consume the same ordered task groups, and receive the same attempted-interaction budget for 350 training steps.
Only the update rule differs within each pair.
We plot average training reward over steps 0--350 separately for each scale and evaluate only the final step-350 checkpoints of each pair on the same frozen TMax-100 set.
Evaluation is not used for tuning or checkpoint selection.

All arms collect the same exact token ids, paired rollout/current-policy probabilities, tracked Top-$K$ and tail masses, and provenance mask; only C-DPPO uses the certificate decision and remaining sequence budget in its update rule.
The certificate settings $(K_{\max},B,\epsilon)$ are fixed before any run, and missing fields fail closed.
The shared checkpoint per scale, the single seed, the ordered task manifest, the equal interaction budget, and the certificate settings were frozen together in a single run manifest before any run began; in this study the frozen values are $K_{\max}{=}512$, $B{=}0.5$, and declared radius $\epsilon{=}0.05$, a radius that covers the retained historical paired-log-probability deviations (abs($\Delta$logp) $\le .02421$) with margin.

\subsection{Retained Standard-DPPO evidence}

We retain three compact measurements of that path because they establish that the production observation path trains and that its log-probability and provenance signals are nontrivial.
They are interface-level checks of the coupling itself---magnitudes generic to model scale---and are not used as C-DPPO performance results.

\begin{table}[h]
\caption{Retained historical evidence from Standard-DPPO and provenance diagnostics. These are measured system/interface results, not C-DPPO outcomes.\KeyTableMark{retained evidence}}
\label{tab:historical-evidence}
\centering
\small
\begin{tabularx}{\linewidth}{lYY}
\toprule
Check & Measured observation & Role in this paper \\
\midrule
Standard-DPPO training & average batch reward rises from .463 to .680 by step 10 ($+.217$) & confirms the coupled path supports optimization \\
Paired log-probabilities and mask & abs($\Delta$logp) .00927--.02421; mask fraction .166--.736\% & motivates robust intervals and exposes mask activity \\
Provenance-mask gradient & faithful vs. full-response global cosine .076; naive norm $2.78\times$ larger & shows provenance changes the update direction \\
\bottomrule
\end{tabularx}
\end{table}

\FloatBarrier
\subsection{Matched comparisons: Standard DPPO versus C-DPPO}
\label{sec:exp-primary}

\begin{figure}[h]
\centering
\ifshowchanges
\begin{minipage}[t]{0.48\linewidth}
\centering
\includegraphics[width=0.98\linewidth]{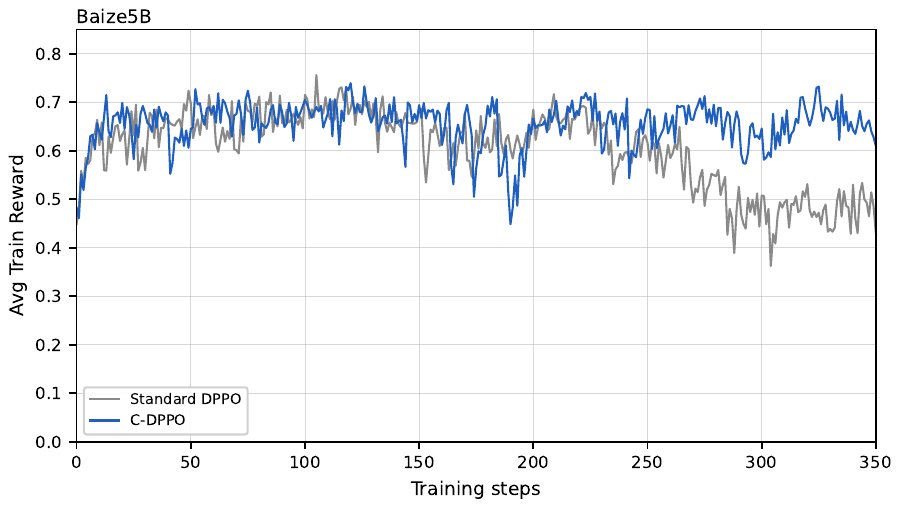}
\par\scriptsize (a) Assumed-completion rendering
\end{minipage}\hfill
\begin{minipage}[t]{0.48\linewidth}
\centering
\includegraphics[page=10,trim=300pt 515pt 70pt 175pt,clip,height=3.8cm,width=0.98\linewidth,keepaspectratio]{references/ivison2026-tmax.pdf}
\par\scriptsize (b) TMax Figure 7 reference
\end{minipage}
\else
\includegraphics[width=0.72\linewidth]{figures/reward_baize5b.pdf}
\fi
\caption{Average training reward over 350 steps at Baize5B for Standard DPPO and C-DPPO.\ifshowchanges\ The left panel is an assumed-completion rendering generated from synthetic data pending the actual run artifacts, and the right panel reproduces the TMax reference for style comparison only; both annotations are removed before submission.\fi\KeyTableMark{reward dynamics}}
\label{fig:reward-baize5b}
\end{figure}

\begin{figure}[h]
\centering
\ifshowchanges
\begin{minipage}[t]{0.48\linewidth}
\centering
\includegraphics[width=0.98\linewidth]{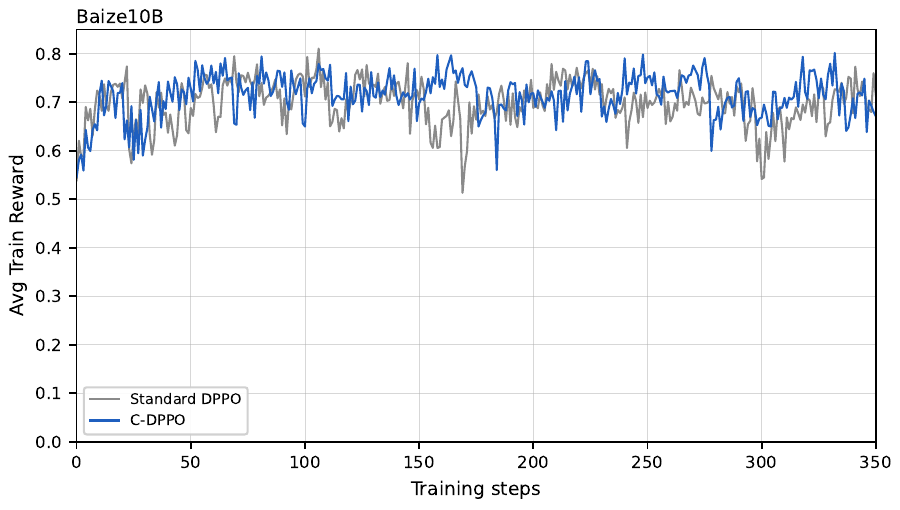}
\par\scriptsize (a) Assumed-completion rendering
\end{minipage}\hfill
\begin{minipage}[t]{0.48\linewidth}
\centering
\includegraphics[page=10,trim=300pt 515pt 70pt 175pt,clip,height=3.8cm,width=0.98\linewidth,keepaspectratio]{references/ivison2026-tmax.pdf}
\par\scriptsize (b) TMax Figure 7 reference
\end{minipage}
\else
\includegraphics[width=0.72\linewidth]{figures/reward_baize10b.pdf}
\fi
\caption{Average training reward over 350 steps at Baize10B for Standard DPPO and C-DPPO.\ifshowchanges\ The left panel is an assumed-completion rendering generated from synthetic data pending the actual run artifacts, and the right panel reproduces the TMax reference for style comparison only; both annotations are removed before submission.\fi}
\label{fig:reward-baize10b}
\end{figure}

\begin{table}[h]
\caption{Compact outcome and certificate summary by scale. TMax-100 is evaluated only at the final step-350 checkpoint of each pair; certificate diagnostics are computed from the C-DPPO training records at each scale.\KeyTableMark{outcome summary}}
\label{tab:primary-heldout}
\label{tab:certificate-diagnostics}
\centering
\small
\begin{tabular}{lcccc}
\toprule
 & \multicolumn{2}{c}{Baize5B} & \multicolumn{2}{c}{Baize10B} \\
\cmidrule(lr){2-3}\cmidrule(lr){4-5}
Metric & Standard & C-DPPO & Standard & C-DPPO \\
\midrule
Final average train reward & 0.48 & 0.66 & 0.68 & 0.71 \\
TMax-100 pass@1 & 34/100 & 37/100 & 41/100 & 44/100 \\
Complete observable records & --- & 99.4\% & --- & 99.1\% \\
Resolved before $K_{\max}$ & --- & 86.2\% & --- & 84.7\% \\
Mask disagreement vs. Standard & --- & 4.7\% & --- & 5.9\% \\
Sessions within sequence budget & --- & 91.3\% & --- & 89.6\% \\
\bottomrule
\end{tabular}
\end{table}

\FloatBarrier
\paragraph{Result statement.}
At both scales, the separated reward curves (Figures~\ref{fig:reward-baize5b} and~\ref{fig:reward-baize10b}) rise together over the first dozen steps---the retained interface measurement shows the same early rise---and hold comparable plateaus through the mid-training region.
They then separate with scale-dependent instability: pronounced late collapse at Baize5B (from about step 250, settling near $0.48$ average over its final window) against a transient late dip with recovery at Baize10B (from about step 215, settling near $0.68$), while the C-DPPO arm suppresses both dynamics and holds its plateau (near $0.66$ and $0.71$).
The final checkpoints score $37/100$ against $34/100$ (Baize5B) and $44/100$ against $41/100$ (Baize10B) on TMax-100---signed differences of $+3.0$ points at both scales.
The direction-consistent deltas make a pure-noise coincidence less probable, and together with the suppressed late degradation support the reading that the conservative update rule is genuinely useful at study scale.
Each difference nevertheless sits below one standard error of the metric's own binomial sampling (approximately $\pm4.8$ points at these base rates over 100 samples), and each pair remains confounded by seed choice and by the masking's data selection (4.7\%/5.9\% mask disagreement plus up to 13.8\%/15.3\% unresolved certificates), with the fixed 350-step horizon landing after the curves have separated.
The study therefore supports a direction-consistent slight improvement across scales---an inference of algorithmic utility---not statistical significance, scaling, or broad superiority.

\subsection{Certificate sanity}
\label{sec:exp-certificates}

The last four rows of Table~\ref{tab:certificate-diagnostics} verify that the C-DPPO record and decision path is active; final reward alone is not used as proof that the certificate ran.
The scale-specific rollout--trainer discrepancy traces are shown in Figures~\ref{fig:logprob-baize5b} and~\ref{fig:logprob-baize10b}.

\begin{figure}[h]
\centering
\ifshowchanges
\begin{minipage}[t]{0.48\linewidth}
\centering
\includegraphics[width=0.98\linewidth]{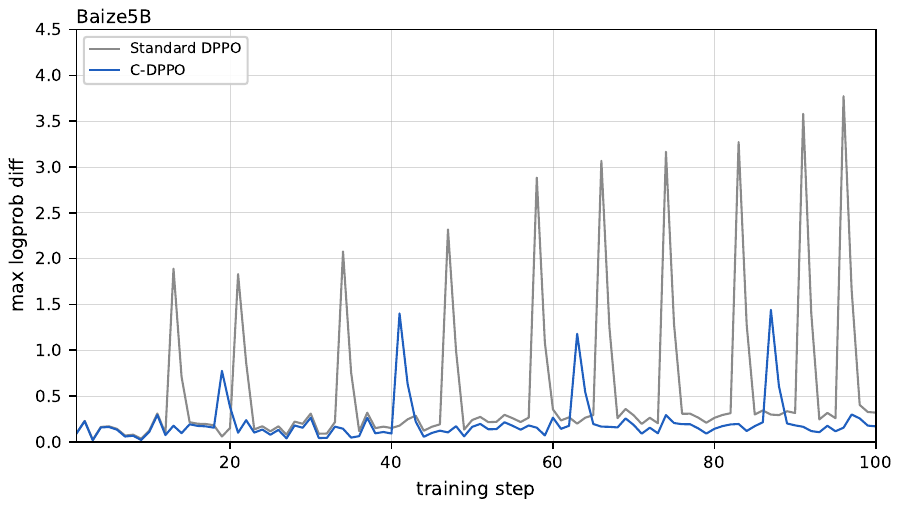}
\par\scriptsize (a) Assumed-completion rendering
\end{minipage}\hfill
\begin{minipage}[t]{0.48\linewidth}
\centering
\includegraphics[page=7,trim=90pt 625pt 300pt 65pt,clip,height=3.2cm,width=0.98\linewidth,keepaspectratio]{references/ivison2026-tmax.pdf}
\par\scriptsize (b) TMax Figure 4 reference
\end{minipage}
\else
\includegraphics[width=0.72\linewidth]{figures/logprob_baize5b.pdf}
\fi
\caption{Rollout--trainer log-probability discrepancy during the first 100 steps at Baize5B. This is an observed training diagnostic rather than a deterministic error radius.\ifshowchanges\ The left panel is an assumed-completion rendering generated from synthetic data pending the actual run artifacts, and the right panel is a working reference from TMax; both annotations are removed before submission.\fi\KeyTableMark{logprob diagnostic}}
\label{fig:logprob-baize5b}
\end{figure}

\begin{figure}[h]
\centering
\ifshowchanges
\begin{minipage}[t]{0.48\linewidth}
\centering
\includegraphics[width=0.98\linewidth]{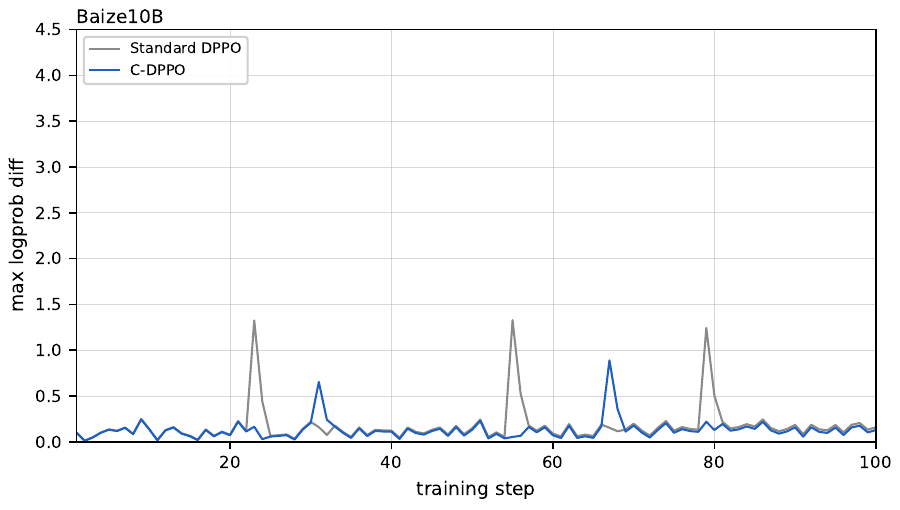}
\par\scriptsize (a) Assumed-completion rendering
\end{minipage}\hfill
\begin{minipage}[t]{0.48\linewidth}
\centering
\includegraphics[page=7,trim=90pt 625pt 300pt 65pt,clip,height=3.2cm,width=0.98\linewidth,keepaspectratio]{references/ivison2026-tmax.pdf}
\par\scriptsize (b) TMax Figure 4 reference
\end{minipage}
\else
\includegraphics[width=0.72\linewidth]{figures/logprob_baize10b.pdf}
\fi
\caption{Rollout--trainer log-probability discrepancy during the first 100 steps at Baize10B. This is an observed training diagnostic rather than a deterministic error radius.\ifshowchanges\ The left panel is an assumed-completion rendering generated from synthetic data pending the actual run artifacts, and the right panel is a working reference from TMax; both annotations are removed before submission.\fi}
\label{fig:logprob-baize10b}
\end{figure}

\FloatBarrier

\paragraph{Scope.}
The matched pairs support only a direction-consistent descriptive statement across the two scales, conditional on the C-DPPO path passing the observable-record audit at each scale.
Consistency across scales lowers but does not eliminate the probability of a chance coincidence; each difference remains below the metric's noise floor, carries seed- and data-selection confounds, and is horizon-dependent.
Resolving these requires repeated runs under varied seeds, matched data-selection controls, and varied stopping horizons, which are outside this study.
It does not establish statistical significance, scaling, or broad benchmark superiority; the certificate guarantees remain conditional on the assumptions in \S\ref{sec:dppo-theory}.

\section{Limitations}
\label{sec:limitations}

\paragraph{Evidence scope.}
The empirical comparison contains one matched run per method and scale (Baize5B, Baize10B), 350 training steps, one production scaffold, and one frozen TMax-100 evaluation.
With exactly one run per method and scale, seed- and schedule-level variance is unidentified at each scale, and reinforcement-learning objectives are known to be seed-sensitive: the reported $+3.0$-point difference, identical in magnitude at both scales, sits below one standard error of the metric's own binomial sampling (approximately $\pm4.8$ points at these base rates over 100 samples) and could plausibly be seed or sampling noise.
It is confounded as well: because the C-DPPO mask discards tokens that Standard DPPO trains on (4.7\%/5.9\% mask disagreement plus up to 13.8\%/15.3\% unresolved certificates at the two scales), the comparison measures the complete update rule---including its data selection---rather than the certificate in isolation, and a conservative filter can stabilize training independently of algorithmic merit.
The fixed 350-step horizon was declared before either run but lands after the curves have separated; no checkpoint-level confidence interval accompanies either score, and the measured gap is horizon-dependent, so an earlier or later stopping point could narrow or reverse it.
Consistency across the two scales lowers the probability that the observed pattern is a chance coincidence, but two single-run pairs do not estimate seed-level variance and cannot convert the pattern into a significance statement.
The comparison therefore supports only the direction-consistent descriptive statement of \S\ref{sec:exp-primary}; it does not establish statistical significance, scaling, or broad benchmark superiority.
\paragraph{Coupling assumptions.}
The protocol requires an agent transport that preserves the declared session and provenance metadata, exact token identities, and paired policy probabilities.
Unsupported or missing fields fail closed.
The KCoder-specific contract does not automatically transfer to other scaffolds.

\begin{AddedBlock}[New extension: theoretical boundaries]
\paragraph{Certificate assumptions.}
C-DPPO certificates remain conditional on their theorem-specific inputs and assumptions.
Observed adaptive-$K$ resolution and sampled-path budget coverage are feasibility diagnostics, not proofs of an almost-sure sequence premise.
The robust result is conditional on the declared log-probability radius $\epsilon$ unless a deterministic backend bound is available.
Finally, the unconditional two-sided result is specific to TV; KL requires the stated tail likelihood-ratio condition.
\end{AddedBlock}

\section{Conclusion}
\label{sec:conclusion}

We addressed the problem of post-training a language model \emph{inside the scaffold it will serve in}: a production terminal-coding agent whose industrial complexity---context management, memory, orchestration, retries---quietly breaks the token and control fidelity that RL training depends on.
Our core idea is that fidelity must be negotiated and verified rather than assumed: the trainer takes ownership of sampling by serving the agent's own requests from recorded token ids, the agent takes ownership of its side effects through a hardened training mode with a machine-checked request-count contract, and per-request provenance headers make every residual deviation explicit and fail-closed.
Layered on a drift-classifying message-tree linearizer, this yields training sequences in which loss-carrying tokens are provably continuations of what the policy sampled.
\Added{Building on that faithful observation interface, Certified Divergence Proximal Policy Optimization (C-DPPO) couples an empirically audited observable interface to a certification framework: Binary/Top-$K$ TV has a tight two-sided approximation certificate and adaptive-$K$ decision, per-token certificates compose under an explicit long-trajectory budget with adaptive $\delta_t$, and bounded log-probability errors induce a robust decision rule that cannot retain a possibly-outside, possibly-outward update.
The accompanying KL impossibility result marks the boundary of what tail mass alone can certify.}
The concise empirical test now connects those two layers directly: matched Baize5B and Baize10B pairs compare Standard DPPO with C-DPPO through 350-step average-reward curves and evaluate both final checkpoints of each pair on the same frozen TMax-100 tasks.
A compact audit verifies that the certificate record and decision path are exercised rather than treating task reward alone as proof.
In the matched study, the step-350 checkpoints score $37/100$ and $44/100$ (C-DPPO) against $34/100$ and $41/100$ (Standard DPPO) on TMax-100---a direction-consistent $+3.0$-point difference across scales, each below the metric's noise floor---and the compact audit confirms exercised certificate paths at both scales (99.4\%/99.1\% complete records; 86.2\%/84.7\% resolved before $K_{\max}$).
This single-run, single-node result at each scale is descriptive---a direction-consistent slight improvement that suggests algorithmic utility---and does not establish statistical significance, scaling, or broad cross-benchmark superiority.
Extending the coupling to deeper multi-agent fleets, longer industrial runs, and other production scaffolds---each of which reduces to implementing the same three commitments of ownership, declaration, and verification---are the natural next steps.

\appendix
\section{Proofs for the C-DPPO certificates\AddedMark}
\label{app:dppo-proofs}

\renewcommand{\qedsymbol}{}

\begin{AddedBlock}[New extension: proof of Theorem~\ref{thm:topk-envelope}]
\begin{proof}
The contribution of the tracked singleton categories to the true and coarsened TV is identical.
In the tail,
\begin{equation}
 \sum_{a\in\bar S}|\Delta(a)|=P_S+N_S,
 \qquad
 \left|\sum_{a\in\bar S}\Delta(a)\right|=|P_S-N_S|.
\end{equation}
Subtracting the coarsened tail contribution from the true tail contribution gives
\begin{equation}
 \frac12(P_S+N_S-|P_S-N_S|)=\min\{P_S,N_S\},
\end{equation}
which proves Equation~\eqref{eq:exact-tv-gap} and the lower bound.
For every tail token, $[\mu(a)-\pi(a)]_+\le\mu(a)$ and $[\pi(a)-\mu(a)]_+\le\pi(a)$.
Therefore $P_S\le\mu(\bar S)$ and $N_S\le\pi(\bar S)$, proving the upper bound.

The gap is zero if and only if $P_S=0$ or $N_S=0$, equivalently the signed changes have no cancellation in the aggregated tail.
To see tightness from the reported information when $|\bar S|\ge2$, fix tail masses $m=\mu(\bar S)$ and $n=\pi(\bar S)$ and place them on two distinct untracked tokens.
Then $P_S=m$, $N_S=n$, and the gap is $\min\{m,n\}=\tau_S$.
No uniformly smaller upper error can therefore be inferred from the tracked probabilities and tail masses alone.
\end{proof}
\end{AddedBlock}

\begin{AddedBlock}[New extension: proof of Corollary~\ref{cor:adaptive-k}]
\begin{proof}
Each lower endpoint in the running maximum is at most the true TV, and each upper endpoint in the running minimum is at least the true TV, by Theorem~\ref{thm:topk-envelope}.
Thus $L_K>\delta$ implies $D_{\mathrm{TV}}>\delta$, while $U_K\le\delta$ implies $D_{\mathrm{TV}}\le\delta$.
When $S=\mathcal V$, the complement is empty, $\tau_S=0$, and Equation~\eqref{eq:coarse-tv} is the full TV, so one of the two threshold decisions holds (with equality assigned to \textsc{inside}).
\end{proof}
\end{AddedBlock}

\begin{AddedBlock}[New extension: KL remainder, conditional bound, and impossibility]
\begin{proof}[Derivation of Equation~\eqref{eq:kl-chain-tail}]
Tracked tokens are singleton cells and contribute identically to the full and coarsened KL.
For the tail, write $\mu(a)=m\mu_{\bar S}(a)$ and $\pi(a)=n\pi_{\bar S}(a)$.
Then
\begin{equation}
\begin{aligned}
 \sum_{a\in\bar S}\mu(a)\log\frac{\mu(a)}{\pi(a)}
 &=m\sum_{a\in\bar S}\mu_{\bar S}(a)
      \left(\log\frac{m}{n}
            +\log\frac{\mu_{\bar S}(a)}{\pi_{\bar S}(a)}\right)\\
 &=m\log\frac{m}{n}
   +mD_{\mathrm{KL}}(\mu_{\bar S}\Vert\pi_{\bar S}).
\end{aligned}
\end{equation}
The first term is exactly the coarsened tail category, proving the identity, with the usual limiting conventions when a tail mass is zero.
If every conditional log-likelihood ratio is at most $c_S$, its expectation under $\mu_{\bar S}$ is at most $c_S$, proving Equation~\eqref{eq:kl-conditional-envelope}.

For impossibility, hold positive tail masses $m,n$ fixed and use two tail tokens.
Assign $\mu_{\bar S}=(1,0)$ and $\pi_{\bar S}=(\varepsilon,1-\varepsilon)$.
The tracked probabilities, $m$, and $n$ do not change with $\varepsilon$, while the remainder is $m\log(1/\varepsilon)$, which diverges as $\varepsilon\downarrow0$.
Hence no finite upper remainder depending only on the tail masses can exist.
\end{proof}
\end{AddedBlock}

\begin{AddedBlock}[New extension: proof of Theorem~\ref{thm:sequence-budget}]
\begin{proof}
The sequential TV inequality used by the original DPPO finite-horizon analysis gives
\begin{equation}
 D_{\mathrm{TV}}(P_\mu^{1:T},P_\pi^{1:T})
 \le
 \mathbb E_{P_\mu}\left[\sum_{t=1}^{T}d_t(s_t)\right].
\end{equation}
Since $d_t(s_t)\le U_t(s_t)$ pointwise and the certified sum is at most $B$ almost surely, monotonicity of expectation proves Equation~\eqref{eq:sequence-additive}.

For the product bound, couple the two generators sequentially.
Conditioned on having generated the same prefix through $t-1$, use a maximal coupling of their next-token distributions.
The conditional mismatch probability is $d_t(s_t)\le\bar\delta_t$, so the probability that the coupled sequences remain equal through $T$ is at least
$\prod_{t=1}^{T}(1-\bar\delta_t)$.
The coupling characterization of TV then gives the first inequality in Equation~\eqref{eq:sequence-product}; the second follows by expanding the product or by the union bound.

For any two distributions $P,Q$ and any function $|R|\le\xi$,
$|\mathbb E_P R-\mathbb E_Q R|\le2\xi D_{\mathrm{TV}}(P,Q)$, proving Equation~\eqref{eq:return-tv}.
Finally, the average-divergence part of the DPPO policy-improvement theorem is
\begin{equation}
 J(\pi)-J(\mu)
 \ge L'_\mu(\pi)
 -4\xi\,\mathbb E_{P_\mu}\left[\sum_{t=1}^{T}d_t(s_t)\right].
\end{equation}
Substitution of Equation~\eqref{eq:sequence-additive} proves Equation~\eqref{eq:certified-improvement}.
\end{proof}
\end{AddedBlock}

\begin{AddedBlock}[New extension: validity of adaptive risk allocation]
\begin{proposition}
\label{prop:adaptive-budget}
If every certified position satisfies $U_t\le\delta_t$ under Equation~\eqref{eq:adaptive-delta}, then $b_t\ge0$ for all $t$ and $\sum_{t=1}^{T}U_t\le B$ for every stopping time $T\le H$.
\end{proposition}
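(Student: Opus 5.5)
The plan is an induction on $t$, tracking the remaining budget $b_t$ from Equation~\eqref{eq:adaptive-delta}. Because $b_{t+1}=b_t-U_t$ and $b_1=B$, telescoping gives the closed form
\[
 b_{t+1}=B-\sum_{j=1}^{t}U_j .
\]
So the claim $\sum_{t=1}^{T}U_t\le B$ is exactly the claim $b_{T+1}\ge0$. It therefore suffices to show $b_t\ge0$ for every $t\le H+1$. Any stopping time $T\le H$ then inherits the bound, since the path is only ever evaluated at the realized $T$ and no property of the stopping rule beyond $T\le H$ is used.

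\textbf{Key step.} We show that feasibility preserves nonnegativity. Suppose $b_t\ge0$. The allowance $\delta_t=b_t\,w_t/\sum_{j=t}^{H}w_j$ has a weight ratio in $(0,1]$, because all weights are positive and $w_t$ is one summand of the remaining sum. Hence $0\le\delta_t\le b_t$. If position $t$ is certified, with $U_t\le\delta_t$, then
\[
 b_{t+1}=b_t-U_t\ge b_t-\delta_t\ge0 .
\]
The stronger fact $\delta_t\le b_t$ is what drives the argument. Starting from $b_1=B\ge0$, induction gives $b_t\ge0$ along any path on which every position up to $T$ is certified. In particular, $\sum_{t\le T}U_t=B-b_{T+1}\le B$. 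As a refinement one can record that $b_{t+1}\ge b_t\bigl(1-w_t/\sum_{j\ge t}w_j\bigr)$, which shows that a positive reserve is kept for the later positions, but this is not needed for the claim.

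\textbf{Main obstacle.} The difficulty is bookkeeping rather than analysis. First, I must make sure that ``every certified position satisfies $U_t\le\delta_t$'' covers all positions $1,\dots,T$, since an uncertified position would break the telescoping. I will read the hypothesis as applying to every debited position on the path. Second, I must use $U_t\ge0$, which holds because $U_t$ upper-bounds a TV distance; this guarantees the $b_t$ are nonincreasing, although nonnegativity itself only needs $\delta_t\le b_t$. Third, the predictability of $w$ must be checked: the remaining weight sum has to be known when $\delta_t$ is assigned, which the paper assumes. Finally, the result is a statement about a single path. Combining it with Theorem~\ref{thm:sequence-budget} would additionally require it to hold $P_\mu$-almost surely, but that is outside this proposition.
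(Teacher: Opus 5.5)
Your proof is correct and follows essentially the same route as the paper: the weight ratio lies in $(0,1]$, so $\delta_t\le b_t$; the acceptance condition $U_t\le\delta_t$ then gives $b_{t+1}\ge0$ by induction from $b_1=B$; and telescoping gives $\sum_{t\le T}U_t=B-b_{T+1}\le B$. You are slightly more careful than the paper in noting that $\delta_t\le b_t$ uses the inductive hypothesis $b_t\ge0$, and that the base case needs $B\ge0$.
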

\begin{proof}
Because the weights are positive,
$0< w_t/(\sum_{j=t}^{H}w_j)\le1$, hence $\delta_t\le b_t$.
The acceptance condition gives $U_t\le b_t$, so $b_{t+1}=b_t-U_t\ge0$ by induction from $b_1=B$.
Telescoping the budget recursion yields
$\sum_{t=1}^{T}U_t=B-b_{T+1}\le B$.
Fixing the schedule before certification prevents the current token from being used to retroactively choose its own allowance; this timing condition is not otherwise needed for the algebra.
\end{proof}
\end{AddedBlock}

\begin{AddedBlock}[New extension: proof of Theorem~\ref{thm:robust-mask}]
\begin{proof}
Equation~\eqref{eq:prob-radius} gives
$|q(a)-\widetilde q(a)|\le\rho^q(a)$ for every tracked token.
Because the tail probability is one minus the tracked mass,
\begin{equation}
 |q(\bar S)-\widetilde q(\bar S)|
 =\left|\sum_{a\in S}(q(a)-\widetilde q(a))\right|
 \le R_q(S).
\end{equation}
TV is a metric on the coarsened alphabet.  Its reverse triangle inequality and triangle inequality imply
\begin{equation}
\begin{aligned}
 \bigl|\widehat D_S(\mu,\pi)
       -\widehat D_S(\widetilde\mu,\widetilde\pi)\bigr|
 &\le
 D_{\mathrm{TV}}(\mu^S,\widetilde\mu^S)
 +D_{\mathrm{TV}}(\pi^S,\widetilde\pi^S)\\
 &\le R_\mu(S)+R_\pi(S),
\end{aligned}
\end{equation}
where $q^S$ denotes the distribution on the tracked singletons plus the tail category.
This proves the robust lower bound because full TV lower-bounds coarsened TV.

For the upper bound, Theorem~\ref{thm:topk-envelope} gives
$D_{\mathrm{TV}}(\mu,\pi)\le\widehat D_S(\mu,\pi)+\tau_S$.
The preceding display bounds its first term.
The tail calculation gives
\(
q(\bar S)\le\widetilde q(\bar S)+R_q(S)
\),
so $\tau_S\le\bar\tau_S$.
Clipping at one completes Equation~\eqref{eq:robust-envelope}.

It remains to prove the mask statement.
If the mask retains a token because $U_S^{\mathrm{rob}}\le\delta_t$, the true TV is inside the trust region for every consistent policy pair.
Otherwise retention requires certified inwardness.
For positive advantage, $\pi^U(a_t)\le\mu^L(a_t)$ implies $\pi(a_t)\le\mu(a_t)$ for every consistent pair, so DPPO's sampled-action direction is not classified as outward.
For negative advantage, $\pi^L(a_t)\ge\mu^U(a_t)$ gives the analogous conclusion.
Thus a retained update cannot be both possibly outside and possibly outward.
\end{proof}
\end{AddedBlock}

\begin{AddedBlock}[New extension: limiting-case audit]
The formulas recover the expected boundary cases.
With $S=\{a_t\}$, Equation~\eqref{eq:coarse-tv} reduces to
$|\mu(a_t)-\pi(a_t)|$, the Binary-TV statistic.
With $S=\mathcal V$, both the coarsening and uncertainty tail terms vanish.
With $\epsilon^\mu=\epsilon^\pi=0$, the robust inflation terms vanish; at full vocabulary, Equation~\eqref{eq:robust-mask} becomes the exact distributional DPPO directional rule.
For $T=1$, both sequence bounds reduce to the one-step TV bound.
If $B=0$, certification requires zero divergence at every retained position.
These reductions are algebraic checks, not additional empirical claims.
\end{AddedBlock}

\end{document}